\documentclass{article}

% if you need to pass options to natbib, use, e.g.:
%     \PassOptionsToPackage{numbers, compress}{natbib}
% before loading neurips_2020

% ready for submission
% \usepackage{neurips_2020}

% to compile a preprint version, e.g., for submission to arXiv, add add the
% [preprint] option:
% \usepackage[preprint]{neurips_2020}

% to compile a camera-ready version, add the [final] option, e.g.:
%     \usepackage[final]{neurips_2020}

% to avoid loading the natbib package, add option nonatbib:
\usepackage[preprint,nonatbib]{neurips_2020}

\usepackage[utf8]{inputenc} % allow utf-8 input
\usepackage[T1]{fontenc}    % use 8-bit T1 fonts
\usepackage{hyperref}       % hyperlinks
\usepackage{url}            % simple URL typesetting
\usepackage{booktabs}       % professional-quality tables
\usepackage{amsfonts}       % blackboard math symbols
\usepackage{nicefrac}       % compact symbols for 1/2, etc.
\usepackage{microtype}      % microtypography
\usepackage{bm}
\usepackage{amsmath, amsthm, amssymb}
\usepackage{graphicx}
\usepackage{subcaption}
\usepackage{hyperref}
\hypersetup{ 
    colorlinks=true,
    citecolor=blue
}

\usepackage{xcolor}

\newtheorem{prop}{Proposition}

\usepackage{newfloat}
\DeclareFloatingEnvironment[
    fileext=los,
    listname={List of appendix figures},
    name=Figure,
    placement=tbhp,
    within=section,
]{apdxfig}

\DeclareFloatingEnvironment[
    fileext=los,
    listname={List of appendix tables},
    name=Table,
    placement=tbhp,
    within=section,
]{apdxtab}

\author{%
 Yan Zhang$^{1}$ \; Michael J. Black$^2$\; Siyu Tang$^{1}$ \\
 $^1$ETH Z\"{u}rich, Switzerland \\
 $^2$Max Planck Institute for Intelligent Systems, T\"{u}bingen, Germany \\
%  $^3$Institute of Neural Information Processing, Ulm University, Germany\\
  \texttt{\{\href{mailto:yan.zhang@inf.ethz.ch}{yan.zhang},\href{mailto:siyu.tang@inf.ethz.ch}{siyu.tang}\}@inf.ethz.ch}\\
  \texttt{\{\href{mailto:yan.zhang@tue.mpg.de}{yan.zhang},\href{mailto:black@tuebingen.mpg.de}{black},\href{mailto:stang@tue.mpg.de}{stang}\}@tuebingen.mpg.de}
%   \texttt{\{\href{mailto:yan.zhang@uni-ulm.de}{yan.zhang}\}@uni-ulm.de}
}

\title{Perpetual Motion: \\ Generating Unbounded Human Motion}

\begin{document}

\maketitle

\begin{center}
  \newcommand{\teaserwidth}{\textwidth}
  \vspace{-0.5cm}
  \centerline{\includegraphics[width=\linewidth]{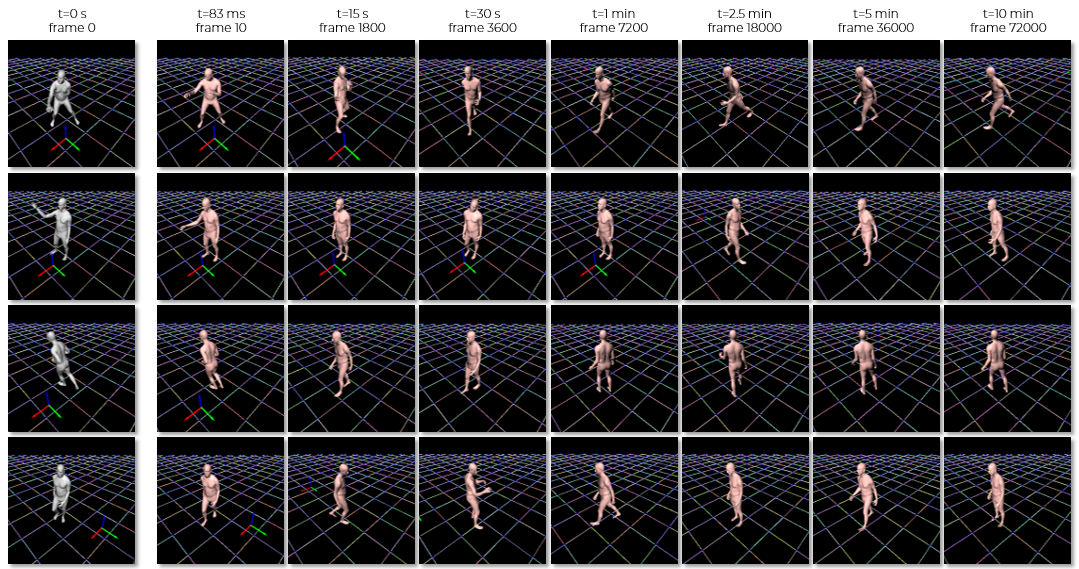}}
  \captionsetup{labelformat=empty}
    \captionof{figure}{Given an initial body configuration, our method generates ``perpetual'' human motion with ever-changing limb poses. After 10 minutes, the body pose still varies, and the motion remains realistic.}
    \label{fig:teaser}
\end{center}%

% \begin{center}
%   \newcommand{\teaserwidth}{\textwidth}
%   \vspace{-0.5cm}
%   \centerline{\includegraphics[width=\linewidth]{figures/teaser.PNG}}
%     \captionof{:}{3D human bodies with various shapes and poses are automatically generated to interact with the scene. Appropriate human-scene contact is encouraged, and human-scene surface interpenetration is discouraged.}
%     \label{fig:teaser}
% \end{center}

\begin{abstract}
  The modeling of human motion using machine learning methods has been widely studied. 
 In essence it is a time-series modeling problem involving predicting how a person will move in the future given how they moved in the past.
 Existing methods, however, typically have a short time horizon, predicting a only few frames to a few seconds of human motion.  
 Here we focus on long-term prediction; that is, generating long sequences (potentially infinite) of human motion that is plausible. 
 Furthermore, we do not rely on a long sequence of input motion for conditioning, but rather, can predict how someone will move from as little as a single pose.
 Such a model has many uses in graphics (video games and crowd animation) and vision (as a prior for human motion estimation or for dataset creation).
 %Generating plausible human motions has been studied for decades. To our knowledge, many works are focusing on short-term motion prediction, i.e. generating a deterministic human motion sequence based on a relatively long-term observation. However, when the scenario is opposite, i.e. generating a long-term non-deterministic human motion sequence based on a relatively short observation (even a static body pose), effective solutions are still missing. This scenario is more like how a real human implies motions from observations like short videos or static images. 
 To address this problem, we propose a model to generate non-deterministic, \textit{ever-changing}, perpetual human motion, in which the global trajectory and the body pose are cross-conditioned. 
We introduce a novel KL-divergence term with an implicit, unknown, prior.
 We train this using a heavy-tailed function of the KL divergence of a white-noise Gaussian process, allowing latent sequence temporal dependency. 
%  \textcolor{red}{SAY MORE ABOUT THE MODEL. WHAT IS NOVEL?}
 We perform systematic experiments to verify its effectiveness and find that it is superior to baseline methods. %For application, an effective method is highly potential for animating static human bodies in artworks, dataset creations, human behavior simulation, and so on.
  
\end{abstract}

\section{Introduction}
\label{sec:intro}

% 1) why we do this task?
% 2) other's work and their limitations
Given a static human pose, e.g.~a person is sitting on a sofa, we are able to predict plausible motion sequences of a person over long time horizons, e.g.~the person might stand up and walk out of the room or the person might lie down on the sofa to have a rest. 
Despite a long history of methods that learn to model human motions, generating natural movements over arbitrarily long sequences remains extremely challenging.
There are two major reasons. First, long-term human motion is intrinsically stochastic. Without the ability to model a rich set of future motion sequences, existing work often results in mode collapse, static poses or unrealistic body configurations.  
Second, human motion consists of a global motion trajectory and local variation in the limb poses. 
To be perceived as a natural motion sequence, the global trajectory and the local poses have to correspond in a physically plausible way.

Because of these challenges, most existing learning-based approaches focus on short-term motion prediction, the aim of which is to predict a deterministic human motion sequence in the very near future (often less than a second), based on a relatively long observation (e.g.~\cite{aksan2019structured,martinez2017human}). The issue is so prevalent that synthesizing motions as short as several seconds is considered ``long-term''. Furthermore, when the global motion trajectory is considered, it is often regarded as a pre-defined input from a user or a separate path planning module.
In this work, our goal is to generate significantly longer, or ``perpetual'', motion: given a short motion sequence or even a static body pose, the goal is to generate non-deterministic \textit{ever-changing} human motions in the future. To this end, we design a two-stream variational autoencoder (see Sec.~\ref{sec:method}) with RNNs, in which the change of body pose and the change of the body translation are conditioned on each other. The model is then learned 
from motion capture data \cite{AMASS:2019}, without additional information like user input or action labels. 

Similar to work that synthesizes generic time series, e.g.~\cite{chung2015recurrent,sonderby2016ladder,aksan2018stcn}, our novel model is auto-regressive over time, and has stochastic modules inside. 
During training, an evidence lower bound (ELBO) is maximized. During testing, the latent variables are sampled from the inference posterior. 
Importantly, our model does not have an explicit prior model as in other studies. 
Instead, we apply a Charbonnier penalty function \cite{charbonnier1994two} on the KL-divergence term, and the implicit latent sequence prior is then different from a standard normal distribution, making the latent variables possess temporal dependencies. In addition, our novel KL-divergence term still retains a valid ELBO, and we observe that it effectively overcomes posterior collapse during training.

To verify the effectiveness of our method, we perform systematic experiments to evaluate the model's representation power, analyze the frequency and diversity of the generated human motions, and conduct a perceptual study to evaluate the naturalness of the generated motions. 
%Our main discoveries are: First, the model of high representation power can produce non-realistic motions. Second, training on a larger-scale dataset may degrade naturalness of generated motion. Third, 
We show that the proposed method outperforms two state-of-the-art baseline methods. Qualitatively, after generating 72000 frames (10 minutes) of motion with our method, the body motion is still plausible.

In summary, our contribution is as follows: 
(1) To address the task of generating perpetual motion from short-term sequences or static poses, we propose a two-stream cross-conditional variational RNN network. Its effectiveness and superior performance to state-of-the-art baseline methods are verified by experiments.
(2) We establish a systematic evaluation pipeline to verify the effectiveness of methods, from the perspective of model representation power, motion frequency, diversity, and naturalness. 
(3) We design a novel KL-divergence term to implicitly make the latent sequence prior possess temporal dependency. Also, this novel KL-divergence term still leads to a valid ELBO, and effectively avoids posterior collapse.

\section{Related Work}
\label{sec:related_work}

\paragraph{Short-term motion prediction.}
From the motion prediction perspective, it is expected to produce deterministic and accurate short sequences. In \cite{martinez2017human}, the model is trained over 1 second of motion, and predicts motion up to 400 milliseconds in the future. This experimental setting is widely used in follow-up studies, such as in \cite{Gui_2018_ECCV,pavllo2018quaternet,pavllo2019modeling,ghosh2017learning,gui2018adversarial,li2018convolutional,wang2019imitation,aksan2019structured}. Although long-term prediction is also considered in these studies, the generated future sequence is often in the range of seconds. In addition, the work \cite{Hernandez_2019_ICCV} formulates motion prediction and planning as temporal in-painting problems. Rather than performing evaluation w.r.t. joint position errors, they propose to measure the power spectrum distribution as a metric to compare their generated results with the ground truth.

\paragraph{Long-term motion generation and character animation.}
From the animation perspective, generated motions are much longer. 
The work of \cite{yan2019convolutional} generates the motion sequence as a whole from a pre-defined latent Gaussian process. In their experiments, sequences of 1000 frames (about 1 minutes) are generated. The work \cite{pavllo2018quaternet} also designs a network for character control, which is based on user inputs and given walking paths. The studies of \cite{holden2017phase,starke2019neural} can generate quite long human motion as well, while the motion is blended from several pre-defined action categories. Physical simulation is employed for motion generation in some studies like \cite{peng2016terrain}, which proposes a reinforcement learning method with physical constraints. The mass, size, friction factor and other attributes of the character model are specified in advance.

\paragraph{Deep variational Bayesian methods for generic time series modeling.}
As human motion is a special time series, deep variational Bayesian methods for generating speech, handwriting, natural language, and other types of time series are related as well. 
The work of VRNN \cite{chung2015recurrent} designs a learnable prior to incorporate temporal dependencies of latent variables. To improve the performance, the work of \cite{sonderby2016ladder} proposes a latent prior with ladder structures for temporal dependency modeling. Based on such ladder structure, STCN \cite{aksan2018stcn} employs a temporal convolutions to process information, and achieves state-of-the-art performance for speech generation and handwriting generation.

\paragraph{Ours versus others.} Methodologically, our method is a special type of deep variational Bayesian method, and practically, our method aims at generating endless motions over time. Therefore, in this paper, we compare our method with two state-of-the-art methods from individual fields, which are conservatively modified to fit our task for fair comparison. See Sec.~\ref{sec:baseline_method} for details.

\section{Method}
\label{sec:method}

\subsection{Human Motion Representation}
\label{sec:motion_repr}

In this paper, we represent human motion with time time sequences of the 3D pelvis location, $\bm{T}=\{ {\bm t}_i\}$, and the articulated body pose, $\bm{\Theta}=\{ {\bm \theta}_i\}$. The body is represented by the SMPL \cite{SMPL:2015} kinematic tree rooted at the pelvis and includes 22 body joints. We use the relative joint rotation in this kinematic tree to represent the body pose, with rotations represented in a 6D continuous space \cite{zhou2019continuity}, which has proven useful for back-propagation. Therefore, the body pose at each time instant has ${\bm \theta}_i \in \mathbb{R}^{132}$, as well as the body translation at each time is ${\bm t}_i \in \mathbb{R}^{3}$.
The pelvis location and rotation are with respect to the world coordinate system.

To make the motion representation invariant to the world coordinate system, we unify the world coordinates of different motion sequences. For each sequence, we set the negative gravity direction as the Z-axis, set the horizontal component from the left hip to the right hip as the X-axis, and determine the Y-axis according to the right-hand rule. Also, the world origin is located at the body pelvis in the first frame. As a pre-processing step, transforming every mocap sequence to this new world coordinate is conducted before training models. As a post-processing step, we transform all generated bodies back to their original world coordinates. Since we deal with inconsistent world coordinates in the AMASS dataset \cite{AMASS:2019}, we refer to this new world coordinate setting as ``AMASS coordiante''.

\subsection{Network Design}
We use the variational autoencoder framework \cite{vae} to model a generic time sequence ${\bm X}_{1:i}=\{ {\bm x}_1, {\bm x}_2, ..., {\bm x}_i\}$. As in \cite{bayer2014learning}, we have
\begin{equation}
    \log p({\bm X}_{1:i}) = \log p({\bm x}_1) + \sum_{i} \log p({\bm x}_i | {\bm X}_{1:i-1}).
\end{equation}
For each conditional probability, we derive an evidence lower bounded (ELBO) as 
\begin{equation}
    \label{eq:elbo}
    \begin{split}
    \log p({\bm x}_i | {\bm X}_{1:i-1}) &\geq \mathbb{E}_{\bm{Z}_{2:i} \sim q_{\alpha}(\bm{Z}_{2:i} | {\bm X}_{1:i-1}) }[ \log p_{\beta}({\bm x}_i | \bm{Z}_{2:i}, {\bm X}_{1:i-1}) ]\\ &-D_{KL}\left( q_{\alpha}(\bm{Z}_{2:i} | {\bm X}_{1:i-1})\, || \, p(\bm{Z}_{2:i} | {\bm X}_{1:i-1}) \right),
    \end{split}
\end{equation}
in which $q_{\alpha}$ is the inference model (encoder), $p_{\beta}$ is the generation model (decoder), and $q_{\alpha}(\bm{Z}_{2:i} | \bm{x}_i)=q_{\alpha}(\bm{Z}_{2:i})$ is assumed to enable the encoder to perform prediction.

\paragraph{Cross-conditional two-stream variational RNN.} 
We design an auto-regressive model illustrated in Fig. \ref{fig:crossmotionvar}, in which the translation stream and the body pose stream are conditioned on each other. The activation function $\sigma(\cdot)$ is Swish \cite{ramachandran2017searching}, i.e. $\sigma(x) = x\cdot\text{sigmoid}(x)$, which makes training faster in our trials.
In this network, the encoder takes the body configuration in the current frame as input, and combines it with the RNN states incorporating information from inputs in the past. Therefore, the posterior model $q_{\alpha}(\bm{Z}_{2:i} | {\bm X}_{1:i-1})$ is effectively modeled. In addition, the RNN in the decoder uses the latent sequence up to the current time to produce the output. Combining with the residual connection, the generation model $p_{\beta}({\bm x}_i | \bm{Z}_{2:i}, {\bm X}_{1:i-1})$ is effectively modeled.

\paragraph{$\alpha$-residual connection.}
Previous work has reported that RNNs can lead to first-frame jump artifacts; i.e.~a considerable discontinuity between the last given input frame and the first generated frame. Martinez et al. \cite{martinez2017human} use a residual connection to overcome this artifact. However, the standard residual connection does not work for us, since the training loss is very small in the beginning, and the model parameters are not updated. To overcome this issue, we employ the {\em exponential moving average scheme} as a type of residual connection. With a hyper-parameter $\alpha$, this ``$\alpha$-residual'' gives ${\bm x}_{i} = \alpha {\bm x}_{i-1} + (1-\alpha) f(\bm{X}_{1:i-1})$, in which $f(\cdot)$ incorporates other network modules. Note that this $\alpha$ cannot be learned via back-propagation, since it will increase rapidly towards 1 and the network parameters will not effectively update. By default, the $\alpha$-residual is set to $\alpha=0.9$ in our trials.
\begin{figure}
    \centering
    \includegraphics[width=0.99\linewidth]{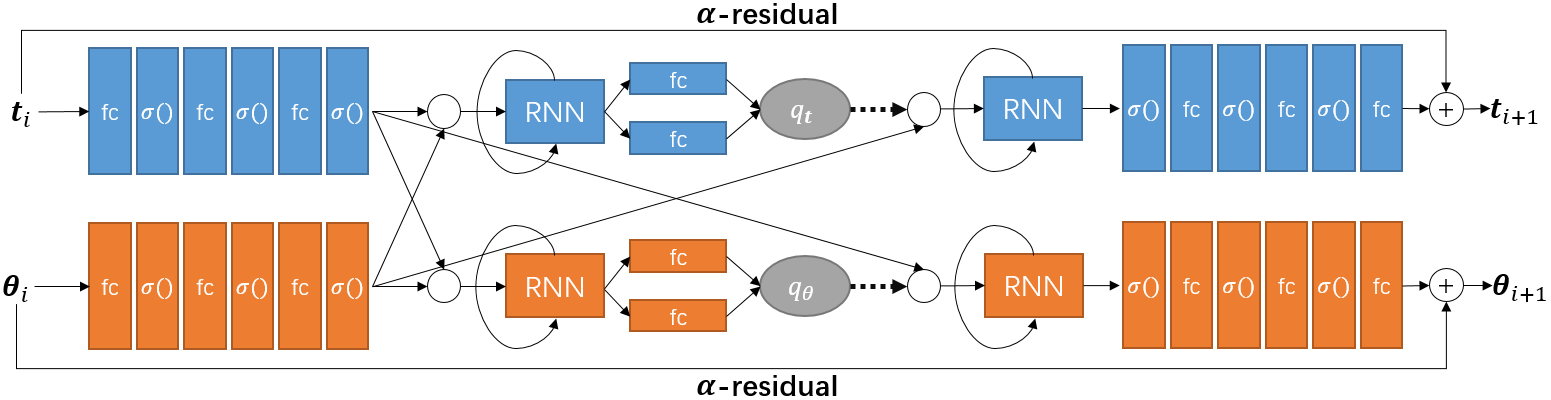}
    \caption{Illustration of our cross-conditional two-stream variational RNN architecture. The blue and orange color denote the body translation stream and the body pose stream, respectively. The circles denote feature concatenation. The dash arrows denote random sampling.  }
    \label{fig:crossmotionvar}
\end{figure}

\subsection{Training Loss}
\label{sec:loss}

Provided ground truth body translations $\bm{T}_{1:i}$ and body poses $\bm{\Theta}_{1:i}$, we feed the sub-sequences $\bm{T}_{1:i-1}$ and $\bm{\Theta}_{1:i-1}$ to the model as input, and obtain $\hat{\bm{T}}_{2:i}$ and $\hat{\bm{\Theta}}_{2:i}$.
% as well as the latent sequences $\bm{Z}^t_{2:i}$ and $\bm{Z}^{\theta}_{2:i}$.
The training loss comprises three components, and is given by $\mathcal{L} = \mathcal{L}_{rec} + \lambda_{vp} \mathcal{L}_{vposer} + \lambda_{kl} \mathcal{L}_{KL}$.

% \begin{equation}
%     \mathcal{L} = \mathcal{L}_{rec} + \lambda_{vp} \mathcal{L}_{vposer} + \lambda_{kl} \mathcal{L}_{KL}.
% \end{equation}

\paragraph{The reconstruction loss $\mathcal{L}_{rec}$:} This component corresponds to the expectation term in the ELBO (Eq.~\ref{eq:elbo}), and is given by
\begin{equation}
    \begin{split}
    \mathcal{L}_{rec} &= | \hat{\bm{T}}_{2:i} - \bm{T}_{2:i} | + | \hat{\bm{\Theta}}_{2:i} - \bm{\Theta}_{2:i} | \\
    &+ \lambda_{ts} \left( | (\hat{\bm{T}}_{3:i}-\hat{\bm{T}}_{2:i-1}) - (\bm{T}_{3:i}-\bm{T}_{2:i-1}) | + | (\hat{\bm{\Theta}}_{3:i}-\hat{\bm{\Theta}}_{2:i-1}) - (\bm{\Theta}_{3:i}-\bm{\Theta}_{2:i-1}) | \right),
    \end{split}
    \label{eq:loss_recon}
\end{equation}
which includes frame-wise and time difference reconstruction. $\lambda_{ts}$ is a hyper-parameter.  

\paragraph{The pose naturalness loss $\mathcal{L}_{vp}$:} Since our body is represented in the SMPL kinematic tree, we can use the pre-trained VPoser \cite{pavlakos2019expressive} to encourage naturalness of generated body poses, as employed in \cite{pavlakos2019expressive,PROX:2019,PSI:2019}. Specifically, given the VPoser encoder $\phi(\cdot)$, we have
\begin{equation}
    \mathcal{L}_{vp} = |\phi(\hat{\bm{\Theta}}^p_{2:i})|^2,
\end{equation}
in which $\hat{\bm{\Theta}}^p$ is the predicted body poses excluding the global orientation (the pelvis rotation).

\paragraph{The KL-divergence loss $\mathcal{L}_{kl}$:} 
% This loss component corresponds to the KL-divergence term in the ELBO (Eq. \ref{eq:elbo}). 
In our cross-conditional VRNN model, we let $\mathcal{L}_{KL}$ be the average of our proposed KL divergence terms corresponding to the two streams.
Studies like \cite{chung2015recurrent,sonderby2016ladder,aksan2018stcn,razavi2018preventing} explicitly design the latent sequence prior with temporal dependency. 
During testing, latent variables are sampled from the designed prior distribution. 
Despite effectiveness, the prior distribution design might not be appropriate, and extra computational cost is involved, as the latent prior as well as the inference posterior are both learned from data.   
In our work, we do not formulate the latent prior explicitly. Instead, we propose a novel KL-divergence term, which implicitly allows the prior to possess temporal dependency. Thus, the latent prior does not have an explicit formula. During testing, we draw latent samples from the inference posterior $q_{\alpha}(\cdot)$, which is dependent on RNN states and is regularized by that implicit latent prior. Specifically, we assume the latent prior not to be a standard normal distribution, then we have $ D_{KL}\left( q_{\alpha}(\bm{Z}_{2:i} | {\bm X}_{1:i-1})\, || \, p(\bm{Z}_{2:i} | {\bm X}_{1:i-1}) \right) \neq D_{KL}\left(q_{\alpha}(\bm{Z}_{2:i} | {\bm X}_{1:i-1})\, || \, \mathcal{N}(0, I) \right) $. We let 
\begin{equation}
    \label{eq:new_kl}
     D_{KL}\left( q_{\alpha}(\bm{Z}_{2:i} | {\bm X}_{1:i-1})\, || \, p(\bm{Z}_{2:i} | {\bm X}_{1:i-1}) \right) = \Psi \left( D_{KL}\left(q_{\alpha}(\bm{Z}_{2:i} | {\bm X}_{1:i-1})\, || \, \mathcal{N}(0, I) \right) \right),
\end{equation}
in which $\Psi(\cdot)$ is the Charbonnier penalty function, $\Psi(s) = \sqrt{1+s^2}-1$, \cite{charbonnier1994two}. To investigate its properties, without loss of generality, we assume the feature dimension in sequences $\bm{X}$ and $\bm{Z}$ is 1D. We find that
\begin{prop}
    The new KL-divergence in Eq. \ref{eq:new_kl} can:
    (1) lead to a higher ELBO than its counterpart with a standard normal distribution prior, (2) introduce temporal dependencies in the latent space, (3) avoid posterior collapse numerically, and (4) retain a low computational cost.
\end{prop}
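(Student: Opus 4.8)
The plan is to treat the four assertions separately, since each rests on a different elementary property of the Charbonnier function $\Psi(s)=\sqrt{1+s^2}-1$ restricted to $s\ge 0$, which is where its argument always lives because $s = D_{KL}(q_{\alpha}(\bm{Z}_{2:i}|\bm{X}_{1:i-1})\,||\,\mathcal{N}(0,I))$ is a KL divergence. The facts I would extract first and reuse throughout are the pointwise bound $\Psi(s)\le s$, the local expansion $\Psi(s)=\tfrac12 s^2 + O(s^4)$ near the origin, and the derivative $\Psi'(s)=s/\sqrt{1+s^2}\in[0,1)$ with $\Psi'(s)\to 0$ as $s\to 0$. Everything below is an application of one of these three observations to the ELBO of Eq.~\ref{eq:elbo} with the modified penalty of Eq.~\ref{eq:new_kl}.

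For claim (1) I would first prove $\Psi(s)\le s$ on $s\ge0$ by squaring: $\sqrt{1+s^2}\le 1+s \iff 1+s^2\le 1+2s+s^2 \iff 0\le 2s$. Since KL divergence is nonnegative, replacing $D_{KL}(q_\alpha||\mathcal{N}(0,I))$ by $\Psi(D_{KL}(q_\alpha||\mathcal{N}(0,I)))$ can only shrink the penalty, so the reconstruction-minus-penalty quantity is at least as large as the Gaussian-prior ELBO. The remaining task is to confirm the result is still a genuine lower bound; for this I would argue that there is a proper density $p(\bm{Z}_{2:i}|\bm{X}_{1:i-1})$ (conditioned on the past, not on $\bm{x}_i$) realizing $D_{KL}(q_\alpha||p)=\Psi(D_{KL}(q_\alpha||\mathcal{N}(0,I)))$, after which the standard identity $\log p(\bm{x}_i|\cdot)=\text{(ELBO)}+D_{KL}(q_\alpha||p(\cdot|\bm{x}_i))$ still certifies validity.

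Claim (2) is where I expect the real difficulty. I would make ``temporal dependency'' precise through a nonseparability argument: for a diagonal Gaussian posterior the Gaussian-prior divergence splits additively over frames, $D_{KL}(q_\alpha||\mathcal{N}(0,I))=\sum_t D_t$, which is exactly the signature of the white-noise, time-independent prior $\mathcal{N}(0,I)$. Because $\Psi$ is strictly nonlinear, $\Psi(\sum_t D_t)\ne\sum_t\Psi(D_t)$, so the modified penalty is \emph{not} a sum of per-frame divergences; writing the implicit prior as $p\propto\mathcal{N}(0,I)\exp(g)$ then forces the correction $g$ to be non-additive across frames, i.e.\ to couple different time steps. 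I would corroborate this at the gradient level, where $\partial_{\mu_t}\Psi(D)=\Psi'(D)\,\mu_t$ carries the shared factor $\Psi'(D)=D/\sqrt{1+D^2}$ that depends on every frame, so the regularization at frame $t$ is modulated by latent activity at all other frames. The obstacle is that the implicit prior is defined only implicitly and appears to depend on $q_\alpha$; I would have to argue carefully that the induced correlations are intrinsic to the prior family rather than an artifact, for instance by showing that any separable prior would force $\Psi(D)=D$ and hence the degenerate case $D=0$.

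For claim (3) I would use the near-origin expansion: posterior collapse is the regime $D_t\to 0$, where the posterior matches the prior and the latent is ignored. With the standard term the penalty gradient $\partial_{\mu_t}D=\mu_t$ keeps an order-one pull toward collapse, whereas $\partial_{\mu_t}\Psi(D)=\Psi'(D)\,\mu_t$ with $\Psi'(D)\to0$ as $D\to0$ makes the collapsing direction a vanishing-gradient, flat region; since $\Psi(D)\approx\tfrac12 D^2$ there, the linear penalty that usually drives collapse is removed, which is the numerical avoidance I would formalize. Claim (4) is then immediate: $\Psi$ is a single scalar nonlinearity applied to the already-computed KL value, with closed-form derivative $D/\sqrt{1+D^2}$ and no added parameters or extra network passes, in contrast to the learned-prior baselines \cite{chung2015recurrent,sonderby2016ladder,aksan2018stcn}, so the overhead is $O(1)$ per KL term and I would simply record this.
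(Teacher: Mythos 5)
Your proposal is correct and rests on the same three elementary facts the paper uses ($\Psi(s)\le s$ on $s\ge 0$, the strict nonlinearity of $\Psi$, and $\Psi'(s)\to 0$ as $s\to 0$), so claims (1), (3) and (4) are argued essentially as in the paper; the only presentational difference in (1) is that you prove $\Psi(s)\le s$ directly by squaring, while the paper reaches the same inequality by expanding $\sqrt{1+\left(\sum_m\phi_{q,m}\right)^2}-1=\sqrt{1+\sum_{m,n}\phi_{q,m}\phi_{q,n}}-1\le\sum_m\phi_{q,m}$ after factorizing the posterior over time. For (2) the routes genuinely diverge: the paper points to the cross terms $\phi_{q,m}\phi_{q,n}$ appearing inside that expansion as the evidence of temporal coupling, whereas you argue by non-separability ($\Psi(\sum_t D_t)\ne\sum_t\Psi(D_t)$, so the implicit prior cannot factor over frames) and by the shared gradient factor $\Psi'(D)$ that modulates every frame's regularization by the latent activity of all other frames. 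These are two views of the same nonlinearity, but your gradient-level statement is the more operational one and is what actually ties (2) to the training dynamics; both remain heuristic rather than rigorous. You also go beyond the paper in demanding that a genuine density $p(\bm{Z}_{2:i}|\bm{X}_{1:i-1})$ realizing $D_{KL}(q_\alpha\,\|\,p)=\Psi\left(D_{KL}(q_\alpha\,\|\,\mathcal{N}(0,I))\right)$ exist for the bound to remain a valid ELBO; the paper simply postulates Eq.~\ref{eq:new_kl} and never addresses this, so you are flagging a real gap (and it is closable: interpolating the prior mean between $\mu_q$ and $0$ sweeps the KL continuously from $0$ up to $s\ge\Psi(s)$, so some member of the path attains $\Psi(s)$, although that member depends on $q_\alpha$, which is presumably why the prior is called ``implicit''). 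One last point in your favor: the paper writes $\Psi'(s)=-s/\sqrt{1+s^2}$, which carries a sign error; your $\Psi'(s)=s/\sqrt{1+s^2}$ is the correct derivative, and the vanishing-gradient argument for (3) only makes sense with your sign.
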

\begin{proof}
We factorize 
\begin{equation}
q_{\alpha}(\bm{Z}_{2:i} | {\bm X}_{1:i-1}) = q(z_2 | \bm{X}_{1:1}) \prod_{m=3}^i q(z_m | \bm{Z}_{2:m-1}, \bm{X}_{1:m-1}).   
\end{equation}
According to our network design, we actually have $q(z_m | \bm{Z}_{2:m-1}, \bm{X}_{1:m-1}) = \mathcal{N}(\mu_{q,m}, \sigma_{q,m}^2)$, in which $\mu_{q,m}$ and $\sigma_{q,m}$ are derived based on the RNN states (Fig. \ref{fig:crossmotionvar}). Therefore, we can rewrite the KL-divergence term to
\begin{equation}
    \begin{split}
    &D_{KL}\left( q_{\alpha}(\bm{Z}_{2:i} | {\bm X}_{1:i-1})\, || \, p(\bm{Z}_{2:i} | {\bm X}_{1:i-1}) \right) \\
    &=\sqrt{1+ \left(\sum_m (\sigma_{q,m}^2+\mu_{q,m}^2 -1 -\log \sigma_{q,m}^2) \right)^2 }-1
    = \sqrt{1+ \left(\sum_m \phi_{q,m} \right)^2 }-1 \\
    &= \sqrt{1+ \sum_{m,n} \phi_{q,m}\phi_{q,n}  }-1 \leq \sum_m \phi_{q,m}=D_{KL} (q_{\alpha}(\bm{Z}_{2:i} | {\bm X}_{1:i-1}) || \mathcal{N}(0, \bm{I})).
\end{split}
\end{equation}
with $\phi_{q,m}$ being the KL-divergence between the posterior and the standard normal distribution at time $m$. From the above derivation, we can see that the temporal correlation term $\phi_{q,m}\phi_{q,n}$ appears in the formula. Also, with the same generation model $p_{\beta}(\cdot)$ and the reconstruction loss, our novel KL-divergence term leads to a higher ELBO. 

Since the Charbonnier penalty function is a scalar function, our method retains a low computation cost, unlike methods with an explicit latent prior, e.g. \cite{chung2015recurrent,aksan2018stcn}. In addition, it is noted the derivative of the Charbonnier function is $\Psi^{'}(s) = -s/\sqrt{1+s^2}$. Consequently, gradients for updating the KL divergence term will get small, when the KL-divergence $D_{KL}(q_{\alpha} || \mathcal{N}(0, \bm{I}))$ is small. Numerically, it effectively overcomes the posterior collapse problem.
\end{proof}

% The advantages are three-fold: (i) The computation cost is as low as the case with a standard normal distribution latent prior. (ii) It leads to a valid ELBO, because 
% \begin{equation}
%     \begin{split}
%     \log p({\bm x}_i | {\bm X}_{1:i-1}) \geq& \mathbb{E}_{\bm{Z}_{2:i} \sim q_{\alpha} }[ \log p_{\beta}({\bm x}_i | \bm{Z}_{2:i}, {\bm X}_{1:i-1}) ] -D_{KL}\left( q_{\alpha}(\bm{Z}_{2:i} | {\bm X}_{1:i-1})\, || \, \mathcal{N}(0, I) \right) \\
%     > \mathbb{E}_{\bm{Z}_{2:i} \sim q_{\alpha} }&[ \log p_{\beta}({\bm x}_i | \bm{Z}_{2:i}, {\bm X}_{1:i-1}) ] - \Psi\left( D_{KL}\left( q_{\alpha}(\bm{Z}_{2:i} | {\bm X}_{1:i-1})\, || \, \mathcal{N}(0, I) \right) \right). 
%     \end{split}.
% \end{equation}
% (iii) It effectively avoids posterior collapse numerically. Due to its derivative $\Psi^{'}(s) = -s/\sqrt{1+s^2}$, the gradients for updating the KL divergence term will get small, when the KL-divergence is small. More discussions on this KL-divergence term without explicit latent prior are referred to the appendix. 

In our experiments, the training loss weights are set to $\{\lambda_{ts}, \lambda_{vp}, \lambda_{KL}\}=\{5, 10^{-4}, 1\}$ for all trials, without weight annealing. 
With a trained model, motion is generated in an auto-regressive manner after providing the initial frame(s). Due to the sampling module, the generated motions are not deterministic.

\section{Experiments}
%\footnote{More experimental details are presented in the appendix.}
\label{sec:experiments}

\subsection{Datasets}
In our experiments, we use {\bf AMASS} \cite{AMASS:2019}, which unifies diverse motion capture data into the SMPL \cite{SMPL:2015} body representation. We use {\bf ACCAD} and {\bf CMU} for training, and use {\bf HumanEva} and {\bf MPI-Mosh} for testing, which are all captured at 120Hz. A brief summary is shown in Tab.~\ref{tab:dataset}.
In {\bf ACCAD}, each motion sequence records a characteristic action, e.g. walking, kicking, etc., and actions are performed on the same ground plane. In {\bf CMU}, many motion sequences contain multiple actions, which are  not restricted to a common ground plane, e.g.~actions like climbing the stairs or jumping from a high place are included.  
% Moreover, we first convert every motion sequence into the AMASS coordinate (see Sec. \ref{sec:motion_repr}) before training models.
We discard sequences shorter than 120 frames (1 second).
\begin{table}[ht]
    \scriptsize
    \centering
    \caption{Summary of our used datasets. The symbol `\#' denotes `the number of'.}
    \begin{tabular}{cccccc}
    \toprule
      dataset & \#sequences & \#subjects & \multicolumn{3}{c}{\#frames per sequence statistics} \\
      &&& average & min & max \\
    \midrule
    {\bf ACCAD} \cite{accad} & 252 & 20 &  764 & 202 & 6361 \\
    {\bf CMU} \cite{mocap_cmu} & 2061 & 106 & 1702 & 128 & 22948 \\
    \midrule
    {\bf HumanEva} \cite{sigal2010humaneva} & 28 &3  & 2180 & 360 & 3552 \\
    {\bf MPI-Mosh} \cite{Loper:SIGASIA:2014} & 77 & 19 & 1413 & 362 & 3287\\
    \bottomrule
    \end{tabular}
    \label{tab:dataset}
\end{table}

\subsection{Investigated Models}
\label{sec:baseline_method}

%% show architectures here or in appendix?? Yes
% \paragraph{Baseline 1: QuaterNet \cite{pavllo2019modeling}.}
% From related motion generation methods we select the QuaterNet as our baseline, since it not only achieves state-of-the-art motion prediction results, but also performs well to generate long-term motions with global movements. To fit our task for fair comparison, our modifications are: (1) generating global body translations and local body poses jointly, (2) replacing the quaternion representation by the 6D rotation representation due to its better performance \cite{zhou2019continuity}, (3) adding a sampling layer after RNNs to produce non-deterministic motion sequences as ours, (4) adding $\alpha$-residual to overcome the first-frame jump. We denote the version with and without random sampling as ``VQ'' and ``Q'' models, respectively, which are trained with the loss in Sec. \ref{sec:loss}.

\paragraph{Baseline 1: QuaterNet \cite{pavllo2019modeling}.}
From related motion generation methods we select the QuaterNet as our baseline, since it not only achieves state-of-the-art motion prediction results, but also performs well on generating long-term motions with global movements. However, the original QuaterNet takes a motion path as input rather than generating the global trajectory. Furthermore it only considers locomotion (e.g. walking or running) in long-term generation. To use the QuaterNet for our task, and enable a fair comparison, we make the following modifications to \cite{pavllo2019modeling}: (1) generating global body translations and local body poses jointly, (2) replacing the quaternion representation by the 6D rotation representation, due to its better performance \cite{zhou2019continuity} and fair comparison with ours, (3) adding a sampling layer to produce non-deterministic motion sequences as ours, (4) adding $\alpha$-residual to overcome the first-frame jump. We denote the version without random sampling as ``Q'' model, and the version with random sampling as ``VQ'' model, which are trained with the same loss as ours (see Sec. \ref{sec:loss}).

% The first baseline is modified from a state-of-the-art work for long-range motion generation \cite{pavllo2018quaternet}, which uses quaternion to represent joint rotations and generates body poses based on a given walking path. Based on the original architecture, we add the body translation as input, add a sampling layer after the two consecutive GRUs, and replace quaternion representation by the 6D rotation representation as in our models. In addition, we train the models either on {\bf ACCAD} or {\bf CMU}, and also introduce our $\alpha$-residual connection into this modified QuaterNet. We denote our modified QuaterNet as the ``{Q}-model''. 

\paragraph{Baseline 2: STCN \cite{aksan2018stcn}.}
From the related variational Bayesian methods, we select the stochastic temporal convolutional network (STCN) as our baseline, due to its superior performance on handwriting and speech generation. Also, comparing to another CNN-based method proposed in \cite{yan2019convolutional}, STCN is auto-regressive for generating arbitrary long sequences, and has a data-adaptive latent sequence prior. To use STCN for our task and enable a fair comparison of the network architecture, our primal modifications are: (1) increasing the receptive field in the temporal encoder to 128, and (2) using the reconstruction loss in Sec. \ref{sec:loss}. We retain its original KL-divergence term, and also learn a latent sequence prior from data. We denote our modified STCN as ``S'' model.

% Our second baseline is modified from STCN \cite{aksan2018stcn}, which is a state-of-the-art model to generate generic time series like handwriting and speech, and outperforms other methods like VRNN \cite{chung2015recurrent}. Similar to STCN \cite{aksan2018stcn}, we use a stack of causal temporal convolutions with increasing dilation rates as the temporal encoder, and use the dense setting to produce the output. Rather than employing the top-down dependency in the inference model, our modified STCN has the standard bottom-up version, and the latent sequence prior is not explicitly specified. Therefore, we can use the training loss in Sec. \ref{sec:loss}. We denote our modified STCN as the ``{S}-model''.   

\paragraph{Our model instances.}
We train our cross-conditional VRNN models either on {\bf ACCAD} or {\bf CMU}, for analyzing the influence of training data. Additionally, we train models either with or without the $\alpha$-residual connection, for discovering its impact in addition to overcoming first-frame jump. Moreover, we select either LSTM \cite{hochreiter1997long} or GRU \cite{gru} as our RNN cells, for studying the influence of RNN cell types. For short, we denote the family of our model instances as ``{C}'' models.

\subsection{Evaluation on Model Representation Power}

% As in \cite{PSI:2019} and other studies to use loss values to perform evaluation, 
We input every testing sequence into the models, and compute the reconstruction error ($e_{rec}$), the time difference reconstruction error ($e_{trec}$), and the negative ELBO ($-\log P$). Note that $-\log P$ includes the loss weights, but does not include the VPoser loss.

Results are shown in Tab. \ref{tab:repr_power}. 
First, we can see the $\alpha$-residual considerably reduces $e_{rec}$, which is probably caused by the weighted-average between the ground truth input and the network output. The temporal difference reconstruction is improved as well, which indicates that $\alpha$-residual can improve the model representation power consistently.
% A probable reason is, that the output is the weighted average between the model output and the ground truth input, and we have set $\alpha=0.9$. It is noticed that the residual connection can also improve the temporal difference reconstruction, and lead to better ELBO values.
Second, C-models trained on {\bf CMU} consistently outperform their counterparts trained on {\bf ACCAD}, indicating that a larger training set with more motion variation is favourable. In addition, we observe that the C-models with GRU perform slightly better than their counterparts with LSTM. Moreover, the S-model has larger time difference reconstruction errors, which suggests that the reconstructed motion is less smooth.

% Moreover, when comparing different model families, it makes more sense to compare S-models with other model types without the $\alpha$-residual connection. We can find that the S-models outperform others in terms of reconstructing frame-wise body configurations. However, their performance on time difference reconstruction is worse than others, which can indicate that the reconstructed motion is less smooth.  

\begin{table}[h!]
    \centering
    \scriptsize
    \caption{Model representation performance on test sets. The model name is specified in terms of \{model type\}-\{with $\alpha$-residual\}-\{RNN cell\}-\{training data\}. The best results are highlighted in boldface. Note that the $-\log P$ of the S-model is not comparable with others due to different formulations.}
    \begin{tabular}{lcccccc}
      \toprule
         & \multicolumn{3}{c}{\bf HumanEva} & \multicolumn{3}{c}{\bf MPI-Mosh} \\ 
         Model & $e_{rec}$ & $e_{trec}$ & $-\log P$ & $e_{rec}$ & $e_{trec}$ & $-\log P$\\
         \midrule
         Q-{\bf ACCAD}                   & 0.013          &0.002             & -             & 0.013         & 0.001     & -\\ 
         VQ-{\bf ACCAD}                  & 0.301             & 0.005             & 0.325         & 0.171         & 0.004     & 0.190 \\
         VQ-$\alpha$Res-{\bf ACCAD}      & 0.023    & 0.002    & 0.047         & {0.015} & 0.001 & 0.034 \\
         VQ-{\bf CMU}                    & 0.302             & 0.005             & 0.326         & 0.173 & 0.004 & 0.192 \\
         VQ-$\alpha$Res-{\bf CMU}        & 0.031             & 0.002    & 0.041 & 0.019 & 0.001 & 0.024 \\
         \midrule
         S-{\bf ACCAD} & 0.091 & 0.005 & 0.130* & 0.088 & 0.005 & 0.126*\\
         S-{\bf CMU} & 0.063 & 0.003 & 0.108* & 0.056 & 0.003 & 0.099* \\
        %  S-{\bf ACCAD} & \textbf{0.068} & 0.014 & 1.153 & 0.070 & 0.014 & 1.153 \\
        %  S-{\bf CMU} & 0.070 & \textbf{0.012} & \textbf{1.138} & \textbf{0.069} & \textbf{0.012} & \textbf{1.135} \\
         \midrule
         C-LSTM-{\bf ACCAD}             & 0.256 & 0.005             & 0.283             & 0.155         & 0.004 & 0.178 \\
         C-$\alpha$Res-LSTM-{\bf ACCAD} & 0.040 & 0.002    & 0.053             & 0.017         & 0.001 & 0.024 \\
         C-$\alpha$Res-GRU-{\bf ACCAD}  & 0.031 & 0.002    & 0.043             & 0.016         & 0.001 & 0.023 \\
         C-LSTM-{\bf CMU}               & 0.071 & 0.003             & 0.091             & 0.060         & 0.003 & 0.078 \\
         C-$\alpha$Res-LSTM-{\bf CMU}   & 0.010 & 0.002    & 0.022             & 0.008         & 0.001 & 0.015 \\
         C-$\alpha$Res-GRU-{\bf CMU}    & {\bf 0.009} & \textbf{0.002} & \textbf{0.021} & \textbf{0.007} & \textbf{0.001} & \textbf{0.015} \\
         \bottomrule
    \end{tabular}
    \label{tab:repr_power}
\end{table}

\subsection{Evaluation on Motion Generation}
For each sequence in {\bf HumanEva} and {\bf MPI-Mosh}, we separately input the first single frame, the first 10\% of frames, and the first 50\% of frames into the model, and generate the rest of the sequence. 
We only compare models with $\alpha$-residual, since models without $\alpha$-residual lead to obvious first-frame jumps.
Moreover, we find the Q-model without random sampling, and the S-model trained on {\bf CMU} produce unrealistic body poses very quickly, and hence do not compare them with others. This indicates that a model good at representing sequences is perhaps bad at generating sequences.

\subsubsection{Motion Frequency}
The motion generation result is non-deterministic, and it can happen that generated results are plausible but different from the ground truth. Following \cite{Hernandez_2019_ICCV}, we compare the frequency power spectrum distribution between generated results and the ground truth. 
We apply the fast Fourier transform over time, and then compute the power spectrum distribution for each feature dimension. 
% Afterwards, we compare the generated results (excluding the input frames) with the ground truth sequence. 
As in \cite{Hernandez_2019_ICCV}, we propose to use two metrics: (1) Power spectrum entropy ratio (PSER), i.e. the entropy increasing ratio of the generated result over the ground truth. The closer to 0, the better. A positive value indicates noise, and a negative value indicates lack of variations. (2) The power spectrum KL-divergence (PSKLD), where lower is better.

We average the scores for all sequences as the final results, which are presented in Tab. \ref{tab:power_spectrum}. We can see that our proposed C-models outperform the baselines consistently. The generated motion from the S-model and the VQ-model has considerably lower frequencies than ground truth, indicating the generated motion lacks variation. Between the C-models, training on {\bf CMU}, which is larger and has more motion variations than {\bf ACCAD}, can lead to high frequency in the generated motion. Also, we find that LSTM is superior to GRU when trained by {\bf CMU}, but inferior when trained by {\bf ACCAD}. 
Moreover, we note that comparing performance between columns in Tab. \ref{tab:power_spectrum} is not appropriate, because the sequence length can heavily influence the precision of frequency computed by FFT.

\begin{table}[h!]
    \centering
    \scriptsize
    \caption{Evaluation results on motion frequency. Scores are shown in terms of {\bf PSER/PSKLD}, in which the {\bf PSKLD} scores are in the unit of $10^{-3}$. The best results are in boldface.   }
    \begin{tabular}{lcccccc}
      \toprule
         & \multicolumn{3}{c}{\bf HumanEva} & \multicolumn{3}{c}{\bf MPI-Mosh} \\ 
         Model & 1-frame & 10\%-frame & 50\%-frame & 1-frame & 10\%-frame & 50\%-frame\\
         \midrule
        %  Q-{\bf ACCAD} & -0.89/1.71 & -0.89/1.87 & -0.86/3.12 & -0.89/3.21 & -0.88/3.46 & -0.85/5.73 \\
         VQ-$\alpha$Res-{\bf ACCAD} & -0.57/0.91 & -0.53/1.00 & -0.51/1.70 & -0.61/1.37 & -0.59/1.50 & -0.54/2.31 \\
        %  Q-{\bf CMU} & -0.95/1.80 &  -0.95/1.97& -0.92/3.24 & -0.94/3.32 & -0.94/3.59 & -0.90/5.85 \\
         VQ-$\alpha$Res-{\bf CMU} & -0.74/0.95 &-0.72/1.01  & -0.63/1.62 & -0.78/1.60 & -0.70/1.63 & -0.59/2.53 \\
         \midrule
         S-{\bf ACCAD} & -0.72/0.91 & -0.72/0.96 & -0.67/1.50 & -0.87/1.98 & -0.76/1.82 & -0.69/2.83 \\
        %  S-{\bf ACCAD} & -0.41/0.69 & -0.42/0.73 & -0.29/1.21 & -0.39/0.97 & -0.39/1.04 & -0.25/1.68 \\
        %  S-{\bf CMU} & -0.50/0.65 & -0.45/0.71 & -0.37/1.13 & -0.52/0.96 & -0.45/1.02 & -0.33/1.56 \\
                 \midrule
        %  C-LSTM-{\bf ACCAD}&  -0.32/0.86 & -0.40/0.95 & -0.47/1.42 & -0.54/1.36 & -0.43/1.42 & -0.47/2.41 \\
         C-$\alpha$Res-LSTM-{\bf ACCAD}& -0.44/\textbf{0.82} & -0.36/0.89 &-0.41/1.53  &-0.68/1.44  & -0.54/1.48 & -0.42/2.34  \\
         C-$\alpha$Res-GRU-{\bf ACCAD}& -0.31/\textbf{0.82} & -0.30/\textbf{0.86} & -0.29/\textbf{1.45} & -0.34/1.24 & -0.33/1.38 & -0.36/2.28 \\
        %  C-LSTM-{\bf CMU} & 0.07/0.93 & 0.07/1.01 & -0.03/1.66 & -0.07/1.19 & -0.09/1.32 &  -0.10/2.10\\
         C-$\alpha$Res-LSTM-{\bf CMU} & \textbf{0.05}/0.92 & \textbf{0.01}/1.00 & \textbf{-0.07}/1.64 & \textbf{-0.11}/1.25 & \textbf{-0.13}/1.36 & \textbf{-0.14}/2.17 \\
         C-$\alpha$Res-GRU-{\bf CMU} & -0.16/0.88 & -0.16/0.91 & -0.22/1.48 & -0.28/\textbf{1.23} & -0.29/\textbf{1.33} & -0.28/\textbf{2.10} \\
         \bottomrule
    \end{tabular}
    \label{tab:power_spectrum}
\end{table}

\subsubsection{Diversity}
For each model and each seed input sequence, we generate three sequences. The diversity is evaluated by standard deviation. Results are presented in Tab. \ref{tab:diversity}. We can see that our methods outperform the baselines. Within the C-model family, training on a larger dataset with more motion variations can improve the diversity. Additionally, we observe that the diversity results are roughly consistent with the PSER results shown in Tab. \ref{tab:power_spectrum}.

\begin{table}[h!]
    \centering
    \scriptsize
    \caption{Performance of generation diversity, which is indicated by standard deviations of three runs with the identical initial condition. A higher value of std indicates a better performance. The best results are highlighted in boldface.}
    \begin{tabular}{lcccccc}
      \toprule
         & \multicolumn{3}{c}{\bf HumanEva} & \multicolumn{3}{c}{\bf MPI-Mosh} \\ 
         Model & 1-frame & 10\%-frame & 50\%-frame & 1-frame & 10\%-frame & 50\%-frame\\
         \midrule 
         VQ-$\alpha$Res-{\bf ACCAD} & 0.10 & 0.12 & 0.10 & 0.09 & 0.09 & 0.07 \\
         VQ-$\alpha$Res-{\bf CMU} & 0.007 & 0.008 & 0.01 & 0.008 & 0.01 & 0.02 \\
         \midrule
         S-{\bf ACCAD} & 0.03 & 0.04 & 0.05 & 0.02 & 0.04 & 0.05 \\
        %  S-{\bf ACCAD} & 0.12 & 0.11 & 0.13 & 0.17 & 0.17 & 0.20 \\
        %  S-{\bf CMU} & 0.08 & 0.10 & 0.12 & 0.11 & 0.13 & 0.16 \\
        \midrule
        %  C-LSTM-{\bf ACCAD}& 0.08 & 0.08 & 0.07 & 0.07 & 0.10 & 0.10 \\
         C-$\alpha$Res-LSTM-{\bf ACCAD}& 0.09 & 0.10 & 0.08 & 0.06 & 0.10 & 0.10  \\
         C-$\alpha$Res-GRU-{\bf ACCAD}& 0.12 & 0.11 & 0.10 & 0.12 & 0.12 & 0.11 \\
        %  C-LSTM-{\bf CMU}& \textbf{0.23} & \textbf{0.20} & \textbf{0.17} & \textbf{0.21} & \textbf{0.21} & \textbf{0.18} \\
         C-$\alpha$Res-LSTM-{\bf CMU} & \textbf{0.22} & \textbf{0.20} & \textbf{0.16} & \textbf{ 0.21} & \textbf{0.20} & \textbf{0.17} \\
         C-$\alpha$Res-GRU-{\bf CMU} & 0.15 & 0.15 & 0.13 & 0.17 & 0.17 & 0.15 \\
         \bottomrule
    \end{tabular}
    \label{tab:diversity}
\end{table}

\subsubsection{Naturalness}
To evaluate the naturalness of generated motions, we perform a perceptual study on Amazon Mechanical Turk. For each generated sequence, 3 workers give a score ranging from 1 (unnatural) to 5 (very natural). As a control group, the ground truth sequence is evaluated in the same manner.
We report the results in Tab. \ref{tab:user_study}. Our method outperforms the baseline methods consistently. Also, models trained on {\bf CMU} perform comparably worse than their counterparts trained on {\bf ACCAD}, indicating that the scale of the dataset and the resulting naturalness are weakly related. 
% More details of the user study is presented in the appendix.

\begin{table}[h!]
    \centering
    \scriptsize
    \caption{Use study results are presented in terms of mean$\pm$standard deviation scores. The best results are highlighted in boldface.}
    \begin{tabular}{lcccccc}
      \toprule
         & \multicolumn{3}{c}{\bf HumanEva} & \multicolumn{3}{c}{\bf MPI-Mosh} \\ 
         Model & 1-frame & 10\%-frame & 50\%-frame & 1-frame & 10\%-frame & 50\%-frame\\
         \midrule
         ground truth & \multicolumn{3}{c}{\sf 3.77$\pm$1.22} & \multicolumn{3}{c}{\sf 3.73$\pm$1.20} \\
         \midrule
         VQ-$\alpha$Res-{\bf ACCAD} & 2.37$\pm$1.31      & 2.88$\pm$1.40  & 2.89$\pm$1.38 & 2.50$\pm$1.41 & 2.53$\pm$1.30 & 3.00$\pm$1.43 \\
         VQ-$\alpha$Res-{\bf CMU} & 3.05$\pm$\textbf{1.18}  & 3.16$\pm$1.34    & 2.99$\pm$1.34   & 2.64$\pm$1.25 & 3.29$\pm$1.28 & 2.94$\pm$1.36 \\
         \midrule
         S-{\bf ACCAD} & 3.05$\pm$1.19    & 2.71$\pm$1.33                     & 3.06$\pm$1.33  & 2.88$\pm$1.24 & 2.87$\pm$1.44 & 3.12$\pm$\textbf{1.27}\\ 
         \midrule
         C-$\alpha$Res-LSTM-{\bf ACCAD}& \textbf{3.47}$\pm$1.24  & \textbf{3.31}$\pm$\textbf{1.21} & 3.27$\pm$\textbf{1.29} & 3.14$\pm$1.28 & \textbf{3.31}$\pm$\textbf{1.16} & \textbf{3.17}$\pm$1.30  \\
         C-$\alpha$Res-GRU-{\bf ACCAD}& 3.44$\pm$\textbf{1.18}  & 3.18$\pm$1.36   & 3.10$\pm$1.43   & \textbf{3.20}$\pm$\textbf{1.18} & 3.23$\pm$1.32 & 3.31$\pm$1.31 \\
         C-$\alpha$Res-LSTM-{\bf CMU} & 3.05$\pm$1.33          & 2.98$\pm$1.34    & \textbf{3.35}$\pm$1.39 & 2.99$\pm$1.36 & 3.01$\pm$1.36 & 3.38$\pm$1.32 \\
         C-$\alpha$Res-GRU-{\bf CMU} & 3.16$\pm$\textbf{1.18}  & 3.20$\pm$1.32    & 3.10$\pm$1.32   & 3.18$\pm$1.25 & 3.10$\pm$1.24 & 3.13$\pm$\textbf{1.27} \\
         \bottomrule
    \end{tabular}
    \label{tab:user_study}
\end{table}
\section{Conclusion}
\label{sec:conclusion}

In this paper, we address the task of generating ``perpetual'' motions, given a static body pose in the beginning. We propose a two-stream variational RNN network, in which the changes of the global trajectory and the body pose are conditioned on each other. With a novel KL-divergence term, we incorporate temporal dependencies in the latent sequence prior, and effectively overcome posterior collapse. To verify the effectiveness and perform fair comparisons, we establish a systematical pipeline to evaluate the model representation power, motion frequency, diversity and naturalness.  

However, our method still has some limitations. For example, foot sliding still exists in some results, and we plan to introduce physical constraints, e.g. foot-ground contact friction, as a solution. Also, the model might jump into an infinite loop after long time generation, although a stochastic mechanism is already employed. A potential solution is to introduce another stochastic mechanism, which allows random sampling from the activity level. We expect that our model can pave the way for future studies, like generating motions from natural languages or environments.

\paragraph{Broader Impact.}
This work has positive impact towards understanding how a real person synthesizes motions, and hence can be related to cognitive psychology and neuroscience. In addition, modeling human motion effectively could be favourable for simulations in biomedical engineering, and patient analysis in neurology and orthopedics. Moreover, personalized motion has potentials to be a biometric measure, which could cause certain privacy issues. 

\paragraph{Acknowledgements.} We appreciate the insightful discussions with Otmar Hilliges and Emre Aksan about STCN and deep variational Bayesian methods.

\paragraph{Disclosure.} In the last five years, MJB has received research gift funds from Intel, Nvidia, Adobe, Facebook, and Amazon. While MJB is a part-time employee of Amazon, his research was performed solely at, and funded solely by, MPI. MJB has financial interests in Amazon and Meshcapade GmbH.

\bibliographystyle{unsrt} % for the arxiv or nips preprint version.
\bibliography{references}

\clearpage
\begingroup
\onecolumn 

\appendix
\begin{center}
\Large{\bf Perpetual Motion: \\ Generating Unbounded Human Motion \\ **Appendix**}
\end{center}

\setcounter{page}{1}

% \section{Correction of a Mistake in Table \ref{tab:user_study}.}
% In Table \ref{tab:user_study}, the methods ``Q-$\alpha$Res-{\bf ACCAD}'' and ``Q-$\alpha$Res-{\bf CMU}'' should be ``VQ-$\alpha$Res-{\bf ACCAD}'' and ``VQ-$\alpha$Res-{\bf CMU}'', respectively. The investigations on methods in Tables \ref{tab:power_spectrum}-\ref{tab:user_study} are consistent.

\section{More Details of Experiments}

\subsection{Details of Investigated Models}
\label{sec:app:models}
In Sec.~\ref{sec:baseline_method}, we have demonstrated how to modify two state-of-the-art methods to fit our task. Here we present more details.

\paragraph{Baseline 1: QuaterNet \cite{pavllo2018quaternet}.}
We have derived two versions from the original QuaterNet, i.e.~the Q-model and the VQ-model. Their architectures are shown in Fig.~\ref{fig:app:qmodel}. We change the quaternion rotation representation to the 6D continuous representation \cite{zhou2019continuity}, and remain the two-layer GRU cells. In our setting, the hidden variable dimension in GRU is 1000, the noise dimension in the VQ-model is 32. The first fc layer in the VQ-model decoder projects vectors from the dimension of 32 to the dimension of 1000, and the second fc layer projects vectors from 1000 to 135 (3D body translation and 132D body joint rotation). The Q-model is trained with the reconstruction loss in Eq.~\ref{eq:loss_recon}, as in our C-model family. The training loss of the VQ-model is identical to the loss of C-models, as demonstrated in Sec.~\ref{sec:loss}.

\begin{apdxfig*}[h!]
    \centering
    \includegraphics[width=\linewidth]{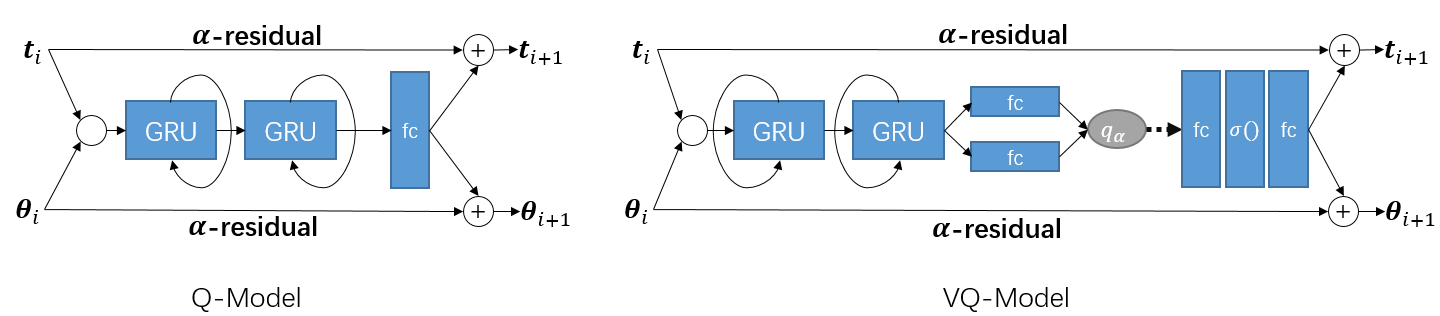}
    \caption{Illustration of our modified QuaterNet models. The notations are identical to Fig.~\ref{fig:crossmotionvar}. The activation function $\sigma(\cdot)$ is Swish \cite{ramachandran2017searching}. The $\alpha$-residual connection is optional. The two fully-connected (fc) layers after the stack of GRUs are to derive the mean and the logarithmic variance of the inference posterior $q_{\alpha}(\cdot)$, respectively.    }
    \label{fig:app:qmodel}
\end{apdxfig*}

\paragraph{Baseline 2: STCN \cite{aksan2018stcn}.}
We first concatenate the body translation and the body pose to form the motion sequence $\bm{X}_{1:i}$, which is then modeled by our modified STCN, i.e.~the S-model. The architecture of the S-model is presented in Fig.~\ref{fig:app:smodel}. As in our C-model and the Q-model, the input and output of the S-model are sub-sequences with time shift. 
To train the S-model, the reconstruction loss Eq.~\ref{eq:loss_recon} is used. In addition, the latent prior is data-adaptive, and is obtained by minimizing its KL-divergence with the inference posterior. Moreover, in our S-model the latent prior and the inference posterior take the same hidden sequence as input, which is in contrast to the original STCN. During testing, the S-model works in an auto-regressive manner, producing the next single frame based on previous 128 frames. The S-model with the original latent prior setting \cite{aksan2018stcn} is also tested in this appendix \ref{sec:app:stcn-origin}.

\begin{apdxfig*}[h!]
    \centering
    \includegraphics[width=\linewidth]{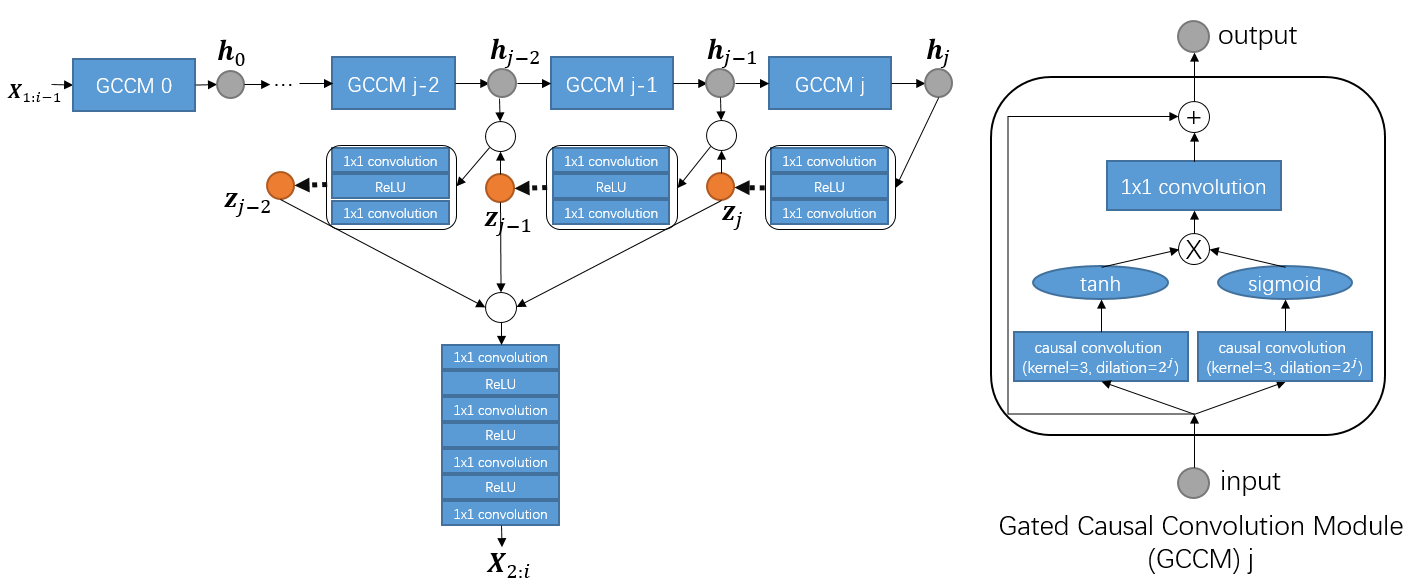}
    \caption{Illustration of our modified STCN model. The notations are identical to Fig.~\ref{fig:crossmotionvar}. In our experiments, we set $j=6$, and hence the maximal receptive field in the temporal convolution encoder is 128. The latent network has 3 layers, and uses the last 3 hidden variables in the temporal convolution encoder to derive latent variables. Then, all latent variables are concatenated to produce the prediction via the decoder.  }
    \label{fig:app:smodel}
\end{apdxfig*}

\subsection{Details of The User Study}
Our user study follows the protocol in \cite{PSI:2019}, and our user study interface is shown in Fig.~\ref{fig:app:userstudy}. Specifically, for each generated sequence, we paint the body mesh to gray if frames are given, and paint the body mesh to red if frames are generated. As a control group, we let users to evaluate the ground truth sequences as well. In each ground truth sequence, we randomly paint the body mesh to gray either in the first frame, or in the first 10\% frames, or in the first 50\% frames, and paint the body mesh to red in the remaining frames. Moreover, to avoid unknown technical problems from the user side, e.g. some users cannot play the video, we let users give a 0 score if they cannot play the video. We find the ratio of valid user study results (with a non-zero score) is 93.1\%. Invalid results are uniformly distributed across all user study groups, and we exclude them from calculating the final results in the table.

\begin{apdxfig*}[h!]
    \centering
    \includegraphics[width=\linewidth]{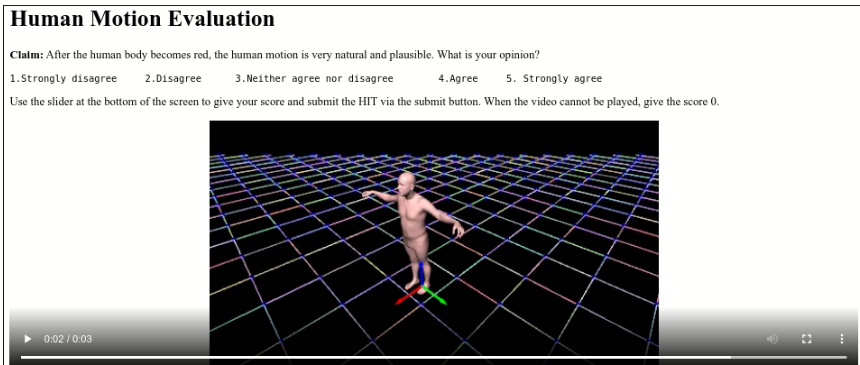}
    \caption{The user study interface in Amazon Mechanical Turk.  }
    \label{fig:app:userstudy}
\end{apdxfig*}

\section{More Analysis on Methods}

\subsection{Our Method versus Related Variational Bayesian Methods}
In this paper, we design our two-stream variational RNN model based on Eq.~\ref{eq:elbo}, in which we assume $q_{\alpha}(\bm{Z}_{2:i} | \bm{x}_i) = q_{\alpha}(\bm{Z}_{2:i})$, namely, the inference posterior (encoder) performs prediction. 
We notice that such assumption leads to large architecture differences between our method and previous variational Bayesian methods like \cite{chung2015recurrent,aksan2018stcn}. Methods without such assumption leads to an autoencoder without time shifting between input and output, and prediction is actually performed by the data-adaptive prior during the testing phase. Specifically, without such assumption, Eq.~\ref{eq:elbo} is changed to 
\begin{equation}
    \label{eq:elbo2}
    \begin{split}
    \log p({\bm x}_i | {\bm X}_{1:i-1}) &\geq \mathbb{E}_{\bm{Z}_{2:i} \sim q_{\alpha}(\bm{Z}_{2:i} |{\color {red} \bm{x}_i, {\bm X}_{1:i-1} } ) }[ \log p_{\beta}({\bm x}_i | \bm{Z}_{2:i}, {\bm X}_{1:i-1}) ]\\ &-D_{KL}\left( q_{\alpha}(\bm{Z}_{2:i} | {\color {red} \bm{x}_i, {\bm X}_{1:i-1} })\, || \, p(\bm{Z}_{2:i} | {\bm X}_{1:i-1}) \right),
    \end{split}
\end{equation}
in which the inference posterior $q_{\alpha}(\cdot)$ and the generation posterior $p_{\beta}(\cdot)$ only perform reconstruction, and prediction comes from the prior $p(\bm{Z}_{2:i} | {\bm X}_{1:i-1})$. As a consequence, formulating an explicit latent sequence prior, which is different from the inference posterior, is necessary. 

In addition, the training loss will be different from Sec.~\ref{sec:loss} as well. Provided the ground truth body translations and body poses, we feed the \textit{entire} sequences to the model as input, and obtain their reconstructed versions. Specifically, the reconstruction loss is defined as
\begin{equation}
    \begin{split}
    \mathcal{L}_{rec} &= | \hat{\bm{T}}_{1:i} - \bm{T}_{1:i} | + | \hat{\bm{\Theta}}_{1:i} - \bm{\Theta}_{1:i} | \\
    &+ \lambda_{ts} \left( | (\hat{\bm{T}}_{2:i}-\hat{\bm{T}}_{1:i-1}) - (\bm{T}_{2:i}-\bm{T}_{1:i-1}) | + | (\hat{\bm{\Theta}}_{2:i}-\hat{\bm{\Theta}}_{1:i-1}) - (\bm{\Theta}_{2:i}-\bm{\Theta}_{1:i-1}) | \right),
    \end{split}
    \label{eq:loss_rec_2}
\end{equation}
which does not have the time shift as in Eq. \ref{eq:loss_recon}. 

\paragraph{Comparison with the original STCN.}
\label{sec:app:stcn-origin}
It is noticed that our modified version of STCN \cite{aksan2018stcn} has the assumption in Eq.~\ref{eq:elbo}. In this case, the modified STCN (i.e.~the S-model in Sec.~\ref{sec:baseline_method} and Sec.~\ref{sec:app:models}) can use the same reconstruction loss with ours, and hence the comparison can focus on the network architecture. 
However, when generating motion, we find that our modified STCN trained on {\bf CMU} produces unrealistic body poses very quickly after the first frame.
A similar observation on CNN-based trajectory prediction is reported by \cite{Nikhil_2018_ECCV_Workshops}.
We conjecture that such unstable performance is caused by this assumption. Therefore, here we evaluate the performance of the original formulation of STCN, without the assumption in Eq.~\ref{eq:elbo}. Specifically, except the latent prior network, the network setting is identical to the S-model in Sec. \ref{sec:baseline_method} and Sec.~\ref{sec:app:models}. Correspondingly, the reconstruction loss becomes to Eq.~\ref{eq:loss_rec_2} rather than Eq.~\ref{eq:loss_recon}, and the ELBO becomes to Eq.~\ref{eq:elbo2} rather than Eq.~\ref{eq:elbo}. The loss weights are the same as in Sec. \ref{sec:loss}. We name this STCN version as ``SO'', meaning ``STCN Origin''. 

Unlike what we have conjectured, we find that the motion generation process based on the SO-model is not completely stable. The SO-model trained on {\bf CMU} produces unrealistic body poses quickly, e.g. after generating 300 frames. Therefore, we think such unstable behavior could be rooted at somewhere else rather than the model itself. A probable reason is, that our data pre-processing step, i.e. transforming the sequence to the AMASS coordinate, is not suitable for CNN-based models. Always starting with $(0,0,0)$ body translation could make motion generation numerically unstable. Nevertheless, our proposed RNN-based models do not encounter such instability issue, indicating that the RNN-based model is more suitable for the motion generation task.    

In the following tables \ref{tab:app:model_repr}-\ref{tab:app:userstudy}, we show the results of the SO-models with respect to the model representation power, the motion frequency, the diversity and the naturalness. Since the SO-model trained on {\bf CMU} cannot perform motion generation stably, we only evaluate its model representation power here. From the results, we can see the naturalness performance of the SO-model is comparably better than our proposed C-models, while the diversity performance is much inferior to our C-models. From the qualitative results, we observe that many generated motions are `standing still', which are plausible but lack variations.

\begin{apdxtab}[h!]
    \centering
    \caption{Performance in terms of model representation power, corresponding to Tab. \ref{tab:repr_power}.}
    \label{tab:app:model_repr}
    \begin{tabular}{lcccccc}
      \toprule
         & \multicolumn{3}{c}{\bf HumanEva} & \multicolumn{3}{c}{\bf MPI-Mosh} \\ 
         Model & $e_{rec}$ & $e_{trec}$ & $-\log P$ & $e_{rec}$ & $e_{trec}$ & $-\log P$\\
         \midrule
         SO-{\bf ACCAD} & 0.108 & 0.005 & 0.212* & 0.120 & 0.005 & 0.213*\\
         SO-{\bf CMU} & 0.078 & 0.004 & 0.271* & 0.076 & 0.004 & 0.267* \\
         \bottomrule
    \end{tabular}
    \end{apdxtab}
   
   \begin{apdxtab}[h!]
    \centering
    \caption{Performance in terms of motion frequency spectrum power, corresponding to Tab. \ref{tab:power_spectrum}.}
    \label{tab:app:frequency}
    \begin{tabular}{lcccccc}
      \toprule
         & \multicolumn{3}{c}{\bf HumanEva} & \multicolumn{3}{c}{\bf MPI-Mosh} \\ 
         Model & 1-frame & 10\%-frame & 50\%-frame & 1-frame & 10\%-frame & 50\%-frame\\
         \midrule
         SO-{\bf ACCAD} & -0.83/0.91 & -0.77/0.94 & -0.77/1.64 & -0.92/2.09 & -0.84/1.81 & -0.77/2.73 \\
         \bottomrule
    \end{tabular}
    \end{apdxtab}
   
    \begin{apdxtab}[h!]
    \centering
    \caption{Performance in terms of motion diversity, corresponding to Tab. \ref{tab:diversity}. }
    \begin{tabular}{lcccccc}
      \toprule
         & \multicolumn{3}{c}{\bf HumanEva} & \multicolumn{3}{c}{\bf MPI-Mosh} \\ 
         Model & 1-frame & 10\%-frame & 50\%-frame & 1-frame & 10\%-frame & 50\%-frame\\
         \midrule 
         SO-{\bf ACCAD} & 0.03 & 0.03 & 0.03 & 0.01 & 0.03 & 0.05 \\
         \bottomrule
    \end{tabular}
    \label{tab:app:diversity}
    \end{apdxtab}
    
    \begin{apdxtab}[h!]
    \centering
    \caption{Performance in terms of motion naturalness, corresponding to Tab. \ref{tab:user_study}.}
    \label{tab:app:userstudy}
    \begin{tabular}{lcccccc}
      \toprule
         & \multicolumn{3}{c}{\bf HumanEva} & \multicolumn{3}{c}{\bf MPI-Mosh} \\ 
         Model & 1-frame & 10\%-frame & 50\%-frame & 1-frame & 10\%-frame & 50\%-frame\\
         \midrule
         SO-{\bf ACCAD} & 3.31$\pm$1.16  & 3.32$\pm$1.19  & 3.51$\pm$1.06  & 3.28$\pm$1.07 & 3.24$\pm$1.17 & 3.32$\pm$1.18\\ 
        \bottomrule
    \end{tabular}
    \end{apdxtab}

\subsection{Influence of The $\alpha$-residual Connection}
In our paper we propose the $\alpha$-residual connection to overcome the first-frame jump artifacts, and set $\alpha=0.9$ in all trials. Here we qualitatively show its effectiveness in Fig. \ref{fig:app:ares}. Obviously, our method overcomes the first-frame jump artifacts in a very effective manner.

\begin{apdxfig*}[h!]
    \centering
    \includegraphics[width=\linewidth]{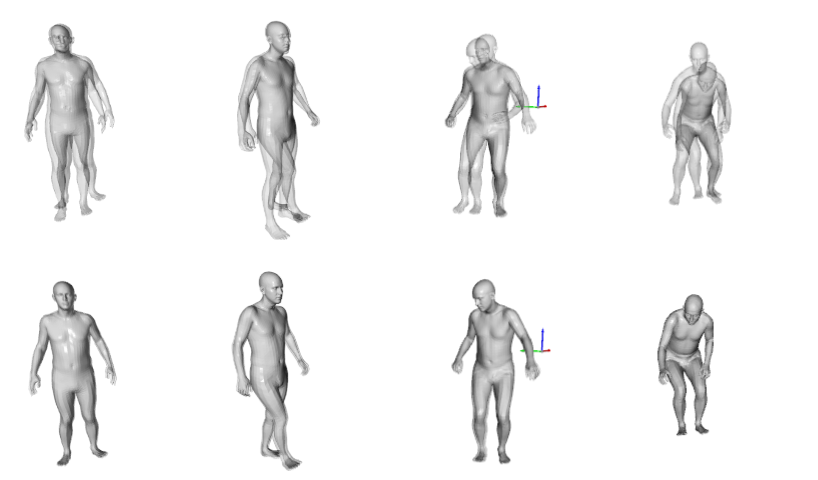}
    \caption{Illustration of the effectiveness of $\alpha$-residual to overcome first-frame jump, in which the last given frame and the first generated frame are overlaid. The first row and the second row show the results without and the with the $\alpha$-residual connection, respectively. Results within each column use the identical given frame.  }
    \label{fig:app:ares}
\end{apdxfig*}

\end{document}